\newtheorem{theorem}{Theorem}[section]
\newtheorem{proposition}[theorem]{Proposition}
\newtheorem{definition}[theorem]{Definition}
\DeclareMathOperator{\Span}{span}
\DeclareMathOperator{\Cs}{Cs}
\DeclareMathOperator{\Rs}{Rs}
\DeclareMathOperator{\LN}{LN}
\DeclareMathOperator{\Real}{\mathbb{R}}
\DeclareMathOperator{\softmax}{softmax}
\DeclareMathOperator{\A}{\textbf{A}}
\DeclareMathOperator{\Atilde}{\Tilde{\A}}
\DeclareMathOperator{\Q}{\textbf{Q}}
\DeclareMathOperator{\K}{\textbf{K}}
\DeclareMathOperator{\V}{\textbf{V}}
\DeclareMathOperator{\He}{\textbf{H}}
\DeclareMathOperator{\D}{\textbf{D}}
\DeclareMathOperator{\T}{\textbf{T}}
\DeclareMathOperator{\Pm}{\textbf{P}}
\DeclareMathOperator{\Qm}{\textbf{Q}}
\DeclareMathOperator{\ReLU}{ReLU}
\DeclareMathOperator{\Linear}{Linear}
\DeclareMathOperator{\Norm}{Norm}
\DeclareMathOperator{\Min}{min}
\DeclareMathOperator{\Rank}{rank}
\newcommand{\modelname}{Transformer}
\title{A Deeper Analysis for Improving Identifiability in Transformers}
\title{A Deeper Analysis for Improving Identifiability in Transformers without\\ Compromising their Performance in Diverse Text Classification Tasks}
\title{Improving Identifiability in Transformers without\\ Compromising their Performance in Varied Text Classification Tasks}
\title{More Identifiable yet Equally Performant Transformers\\ for Text Classification}
\author{First Author \\
  Affiliation / Address line 1 \\
  Affiliation / Address line 2 \\
  Affiliation / Address line 3 \\
  \texttt{email@domain} \\\And
  Second Author \\
  Affiliation / Address line 1 \\
  Affiliation / Address line 2 \\
  Affiliation / Address line 3 \\
  \texttt{email@domain} \\}
\date{}
\author{Rishabh Bhardwaj$^1$, Navonil Majumder$^1$, Soujanya Poria$^1$, Eduard Hovy$^2$\\[1ex]
$^1$~Singapore University of Technology and Design, Singapore\\
$^2$~Carnegie Mellon University, Pittsburgh, PA, USA  \\
\texttt{rishabh\_bhardwaj@mymail.sutd.edu.sg}\\
\texttt{\{navonil\_majumder, sporia\}@sutd.edu.sg}\\ \texttt{hovy@cs.cmu.edu} 
}
\begin{document}
\maketitle
\begin{abstract}
Interpretability is an important aspect of the trustworthiness of a model's predictions. Transformer's predictions are widely explained by the attention weights, i.e., a probability distribution generated at its self-attention unit (head). Current empirical studies provide shreds of evidence that attention weights are not explanations by proving that they are not unique. A recent study showed theoretical justifications to this observation by proving the non-identifiability of attention weights. For a given input to a head and its output, if the attention weights generated in it are unique, we call the weights identifiable. In this work, we provide deeper theoretical analysis and empirical observations on the identifiability of attention weights. Ignored in the previous works, we find the attention weights are more identifiable than we currently perceive by uncovering the hidden role of the key vector. However, the weights are still prone to be non-unique attentions that make them unfit for interpretation. To tackle this issue, we provide a variant of the encoder layer that decouples the relationship between key and value vector and provides identifiable weights up to the desired length of the input. We prove the applicability of such variations by providing empirical justifications on varied text classification tasks. The implementations are available at \url{https://github.com/declare-lab/identifiable-transformers}.
\end{abstract}

\section{Introduction}
Widely adopted \modelname{} architecture \cite{vaswani2017attention} has obviated the need for sequential processing of the input that is enforced in traditional Recurrent Neural Networks (RNN). As a result, compared to a single-layered LSTM or RNN model, a single-layered \modelname{} model is computationally more efficient, reflecting in a relatively shorter training time~\cite{vaswani2017attention}. This advantage encourages the training of deep \modelname-based language models on large-scale datasets. Their learning on large corpora has already attained state-of-the-art (SOTA) performances in many downstream Natural Language Processing (NLP) tasks. A large number of SOTA machine learning systems even beyond NLP \cite{lu2019vilbert} are inspired by the building blocks of \modelname{} that is multi-head self-attention \cite{radford2018improving, devlin2018bert}. 

A model employing an attention-based mechanism generates a probability distribution $\mathbf{a}$~=~$\{a_1, \ldots, a_n\}$ over the $n$ input units $z$~=~$\{z_1, \ldots, z_n\}$. The idea is to perform a weighted sum of inputs, denoted by $\sum_{i=1}^{n} a_iz_i$, to produce a more context-involved output. The attention vector, $\mathbf{a}$, are commonly interpreted as scores signifying the relative importance of input units. However, counter-intuitively, it is recently observed that the weights generated in the model do not provide meaningful explanations \cite{jain2019attention,wiegreffe2019attention}.

Attention weights are (structurally) \textit{identifiable} if we can uniquely determine them from the output of the attention unit \cite{brunner2019identifiability}. Identifiability of the attention weights is critical to the model's prediction to be interpretable and replicable.  If the weights are not unique, explanatory insights from them might be misleading.  

The \textit{self}-attention transforms an input sequence of vectors $z$~=~$\{z_1, \ldots, z_n\}$ to a contextualized output sequence $y$~=~$\{y_1, \ldots, y_n\}$, where $y_k$~=~$\sum_{i=1}^n a_{(k,i)}\;z_i$. The scalar $a_{(k,i)}$ captures how much of the $i_\text{th}$ token contributes to the contextualization of $k_\text{th}$ token. A \modelname{} layer consists of multiple heads, where each head performs self-attention computations, we break the head computations in two phases:

\begin{itemize}%[leftmargin=*, itemsep=0.1em]
    \item Phase 1: Calculation of attention weights $a_{(k,i)}$. It involves mapping input tokens to key and query vectors. The dot product of $k_\text{th}$ query vector and $i_\text{th}$ key vector gives $a_{(k,i)}$.
    \item Phase 2: Calculation of a contextualized representation for each token. It involves mapping input tokens to the value vectors. The contextualized representation for $k_\text{th}$ token can be computed by the weighted average of the value vectors, where the weight of $i_\text{th}$ token is $a_{(k,i)}$ computed in first phase.
\end{itemize}

The identifiability in \modelname{} has been recently studied by \citet{brunner2019identifiability} which provides theoretical claims that under mild conditions of input length, attention weights are not unique to the head's output. Essentially their proof was dedicated to the analysis of the computations in the second phase, i.e., token contextualization. However, the theoretical analysis ignored the crucial first phase where the attention weights are generated. Intrinsic to their analysis, the attention identifiability can be studied by studying only the second phase of head computations. However, even if we find another set of weights from the second phase, it depends on the first phase if those weights can be generated as the part of key-query multiplication. 

In this work, we probe the identifiability of attention weights in \modelname{} from a perspective that was ignored in \citet{brunner2019identifiability}. We explore the previously overlooked first phase of self-attention for its contribution to the identifiability in \modelname{}. During our analysis of the first phase, we uncover the critical constraint imposed by the size of the key vector\footnote{The size of key and query vector is expected to be the same due to the subsequent dot product operation} $d_k$. The flow of analysis can be described as
\begin{itemize}[itemsep=0em,wide,leftmargin=!]
    \item We first show that the attention weights are identifiable for the input sequence length $d_s$ no longer than the size of value vector $d_v$ (\cref{a_identi}) \cite{brunner2019identifiability}\footnote{The sequence length denotes number of tokens at input.}.
    \item For the case when $d_s > d_v$, we analyse the attention weights as raw dot-product (logits) and the $\softmax$ed dot-product (probability simplex), independently. An important theoretical finding is that both versions are prone to be unidentifiable.
    \item In the case of attention weights as logits (\cref{logits}), we analytically construct another set of attention weights to claim the unidentifiability. In the case of attention weights as $\softmax$ed logits (\cref{soft_logits}), we find the attention identifiability to be highly dependent on $d_k$. Thus, the size of key vector plays an important role in the identifiability of the self-attention head. The pieces of evidence suggest that the current analysis in \citet{brunner2019identifiability} ignored the crucial constraints from the first phase in their analysis.
\end{itemize}

To resolve the unidentifiability problem, we propose two simple solutions (\cref{sec:solutions}). For the regular setting of the \modelname{} encoder where $d_v$ depends on the number of attention heads and token embedding dimension, we propose to reduce $d_k$. This may lead to more identifiable attention weights. Alternatively, as a more concrete solution, we propose to set $d_v$ equal to token embedding dimension while adding head outputs as opposed to the regular approach of concatenation \cite{vaswani2017attention}. Embedding dimension can be tuned according to the sequence length up to which identifiability is desired. We evaluate the performance of the proposed variants on varied text classification tasks comprising of ten datasets (\cref{classification}).

In this paper, our goal is to provide concrete theoretical analysis, experimental observations, and possible simple solutions to identifiability of attention weights in \modelname{}. The idea behind identifiable variants of the \modelname{} is---the harder it is to obtain alternative attention weights, the likelier is they are identifiable, which is a desirable property of the architecture. Thus, our contribution are as follows:
\begin{itemize}[itemsep=0em,wide,leftmargin=!]
    \item We provide a concrete theoretical analysis of identifiability of attention weights which was missing in the previous work by \citet{brunner2019identifiability}.
    \item We provide \modelname{} variants that are identifiable and validate them empirically by analysing the numerical rank of the attention matrix generated in the self-attention head of the \modelname{} encoder. The variants have strong mathematical support and simple to adopt in the standard Transformer settings.
    \item We provide empirical evaluations on varied text classification tasks that show higher identifiability does not compromise with the task's performance.
\end{itemize}

\section{Background}
\subsection{Identifiability}
A general trend in machine learning research is to mathematically model the input-output relationship from a dataset. This is carried out by quantitatively estimating the set of model parameters that best fit the data. The approach warrants prior (to fitting) examination of the following aspects:

\begin{itemize}[leftmargin=*, itemsep=0.1em]
    \item The sufficiency of the informative data to the estimate model parameters, i.e., practical identifiability. Thus, the limitation comes from the dataset quality or quantity and may lead to ambiguous data interpretations \cite{raue2009structural}.
    
    \item The possibility that the structure of the model allows its parameters to be uniquely estimated, irrespective of the quality or quantity of the available data. This aspect is called structural identifiability. A model is said to be structurally \textit{unidentifiable} if a different set of parameters yield the same outcome. 
\end{itemize}

In this work, we focus on the structural identifiability \cite{bellman1970structural}. It is noteworthy that the goodness of the fit of a model on the data does not dictate its structural identifiability. Similar to \citet{brunner2019identifiability}, we focus our analysis on the identifiability of attention weights, which are not model parameters, yet demands meaningful interpretations and are crucial to the stability of representations learned by the model.  

\subsection{Transformer Encoder Layer} \label{encoder}
We base our analysis on the building block of \modelname{}, i.e., the encoder layer \citep{vaswani2017attention}. The layer has two sub-layers. First sub-layer performs the multi-head self-attention, and second is feed-forward network. Given a sequence of tokens $\{x_1, \ldots, x_{d_s}\}$, an embedding layer transforms it to a set of vector $\{z_1, \ldots, z_{d_s}\} \in \Real^{d_e}$, where $d_e$ denotes token embedding dimension. To this set, we add vectors encoding positional information of tokens $\{p_1, \ldots, p_{d_s}\} \in \Real^{d_e}$.

\paragraph{Multi-head Attention.} \label{MHA}
Input to a head of multi-head self-attention module is \textbf{W} $\in \Real^{d_s \times d_e}$, i.e., a sequence of $d_s$ tokens lying in a $d_e$-dimensional embedding space. Tokens are projected to $d_q$-size query, $d_k$-size key, and $d_v$-size value vectors using linear layers, resulting in the respective matrices - Query $\Q \in \Real^{d_s \times d_q}$, Key $\K \in \Real^{d_s \times d_k}$, and Value $\V \in \Real^{d_s \times d_v}$. The attention weights $\A \in \Real^{d_s \times d_s}$ can be computed by
\begin{equation} \label{at_A}
    \A = \softmax\left(\frac{\Q\K^T}{\sqrt{d_q}}\right).
\end{equation}
The $(i,j)_\textsuperscript{th}$ element of $\A$ shows how much of $i_{\text{th}}$ token is influenced by $j_{\text{th}}$ token. The output of a head $\He \in \Real^{d_s \times d_e}$ is given by
\begin{equation} \label{at_T}
    \He = \A \V \D = \A \T,
\end{equation}
where $\D \in \Real^{d_v \times d_e}$ is a linear layer and the matrix $\T \in \Real^{d_s \times d_e}$ denotes the operation $\V \D$. The $\Real^{d_s \times d_e}$ output of multi-head attention can be expressed as a summation over $\He$ obtained for each head\footnote{For simplicity, we have omitted head indices.}. The $i\textsubscript{th}$ row of multi-head output matrix corresponds to the $d_e$ dimensional contextualized representation of $i\textsubscript{th}$ input token. In the original work, \citet{vaswani2017attention}, the multi-head operation is described as the concatenation of $\A \V$ obtained from each head followed by a linear transformation $\D \in \Real^{d_e \times d_e}$. Both the explanations are associated with the same sequence of matrix operations as shown in \cref{fig:hadd}.

\begin{figure}[t] 
    \centering 
    \includegraphics[width=1.0\linewidth]{./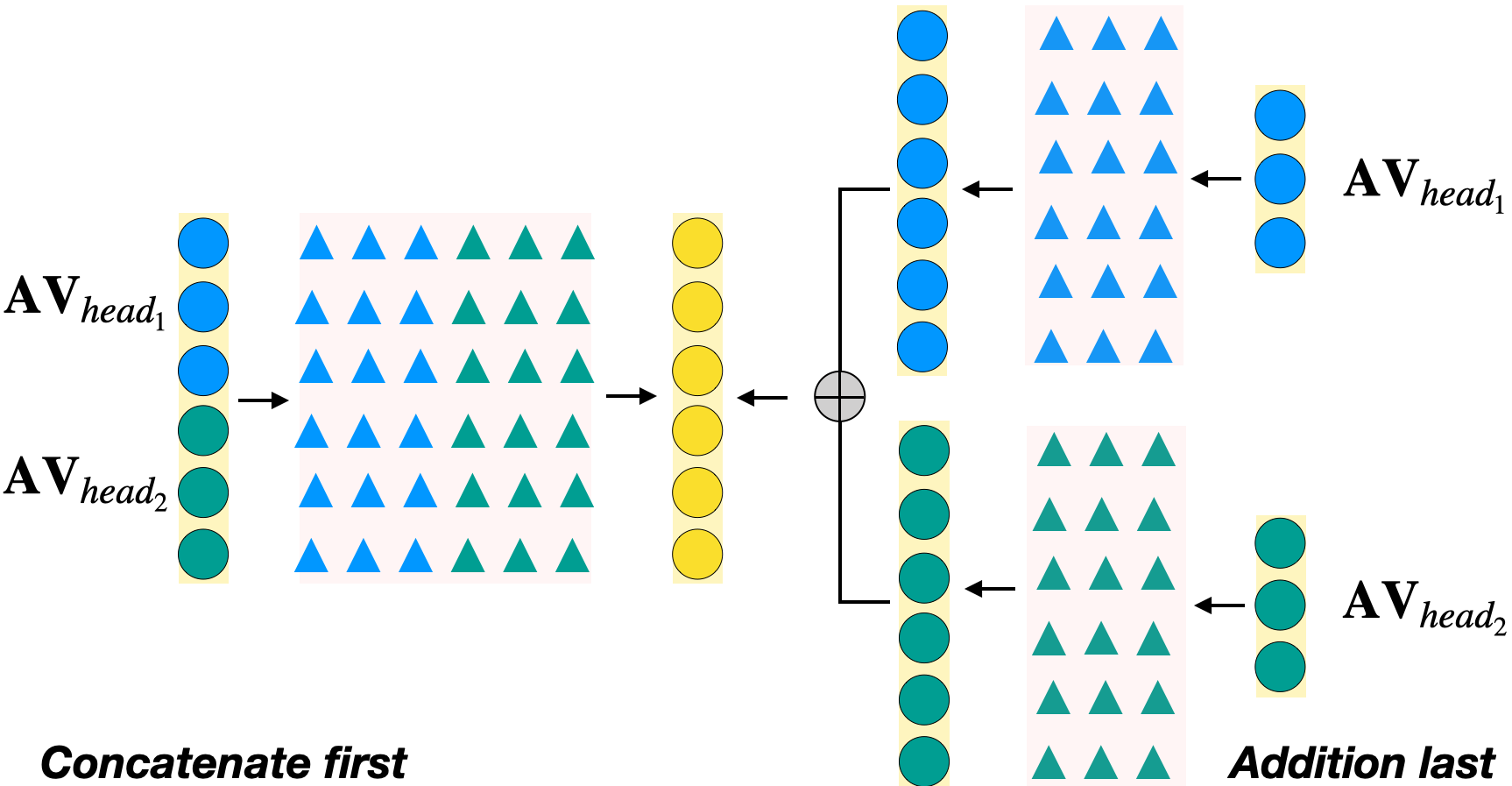} 
    \caption{\footnotesize An illustration for a \modelname{} with two-head attention units. Triangles depict matrix weights. The left side shows concatenation of head outputs fed to a linear layer. The right side shows another interpretation of the same set of operations where we consider a linear transform applied to each head first. The transformed head outputs are then added.}
    \label{fig:hadd}
\end{figure}

In regular \modelname{} setting, a token vector is $t_i \in \{(z_j+p_j)\}_{i=1}^{d_s}$ is $d_e$~=~$512$ dimensional, number of heads $h$=8, size of $d_k$=$d_q$=$d_v$=$d_e/h$=64.

\paragraph{Feed-Forward Network.}
This sub-layer performs the following transformations on each token representation at the output of a head:
\begin{align*}
    y_1 &= \Linear_1(\Norm(t_i+\text{head output for } t_i))\\ 
    y_2 &= \Norm(t_i+\ReLU(\Linear_2(y_1)))
\end{align*}
$\Linear_1$ and $\Linear_2$ are linear layers with 2048 and 512 nodes, respectively. $\Norm$ denotes mini-batch layer normalization.

\section{Identifiability of Attention} \label{identifiability}
The output of an attention head $\He$ is the product of $\A$ and $\T$ (\cref{at_T}). Formally, we define identifiability of attention in a head:

\begin{definition}
For an attention head's output $\He$, attention weights $\A$ are identifiable if there exists a unique solution of $\A\T=\He$.
\end{definition}

The above definition can be reformulated as

\begin{definition}
$\A$ is unidentifiable if there exist an $\Atilde$, $(\Atilde \neq \mathbf{0})$,  such that $(\A+\Atilde)$ is obtainable from phase-1 of head computations and satisfy
\begin{equation}\tag{constraint-R1}\label{atilda}
    (\A + \Atilde) \T=\A \T \implies \Atilde \T=\mathbf{0}.
\end{equation}
\end{definition}

Under this constraint, we get $\Tilde{a}_i \T=0$ where $\Tilde{a}_i$ is the $i_{th}$ row of $\Atilde$. The set of vectors which when multiplied to $\T$ gets mapped to zero describes the left null space of $\T$ denoted by $\LN(\T)$. The dimension of the left null space of $\T$ can be obtained by taking the difference of the total number of rows ($d_s$) and the number of linearly independent rows, i.e, rank of the matrix $\T$ denoted by $\Rank(\T)$. Let $\dim(\cdot)$ denotes the dimension of a vector space, then
\begin{align}
    \LN(\T)&=\{\mathbf{v} \mid \mathbf{v}^T\T=\mathbf{0}\}\\
    \dim\big(\LN(\T)\big)&=d_s-\Rank(\T) \label{dimLN}.
\end{align}

\subsection{``$\A$'' is Identifiable for $\mathbb{\textit{d}_\textit{s} \leq \textit{d}_\textit{v}}$} \label{a_identi}

If $\dim(\LN(\T))=0$ then $\LN(\T)=\{\mathbf{0}\}$, it leads to the only solution of \ref{atilda} that is $\Atilde=\mathbf{0}$. Therefore, the unidentifiabilty condition does not hold. Now we will prove such a situation exists when the number of tokens is not more than the size of value vector. 

The matrix $\T$ in \cref{at_T} is product of $d_s \times d_v$ value matrix $\V$ and $d_v \times d_e$ transformation $\D$. We utilize the fact that the rank of product of two matrices $\Pm$ and $\Qm$ is upper bounded by the minimum of $\Rank(\Pm)$ and $\Rank(\Qm)$, i.e., $\Rank(\Pm\Qm) \leq \min\big(\Rank(\Pm),\Rank(\Qm)\big)$. Thus, the upper bound on $\Rank(\T)$ in \cref{dimLN} can be determined by
\begin{equation} \label{rank_bound}
\small{%
\begin{aligned}
    \Rank(\T) &\leq \min \Big(\Rank(\V), \;\Rank(\D)\Big)\\
              &\leq \min \Big( \min(d_s, d_v), \min(d_v, d_e) \Big)\\
              &\leq \min \Big(d_s, d_v, d_v, d_e \Big)\\
              &\leq \min\Big(d_s, d_v\Big) \;\;\;\;\;\;\text{(as $d_e > d_v$)} \\
              &=\min\Big(d_s, 64\Big)
\end{aligned}
}%
\end{equation}
where the last inequality is obtained for a head in the regular \modelname{} for which $d_v$=64.

\paragraph{Numerical rank.}
To substantiate the bounds on $\Rank(\T)$ as derived above, we set up a model with a single encoder layer (\cref{exp_setup}). The model is trained to predict the sentiment of IMDB reviews (\cref{classification}). We feed the review tokens to the model and store the values generated in $\T$ of the first head. A standard technique for calculating the rank of a matrix with floating-point values and computations is to use singular value decomposition. The rank of the matrix will be computed as the number of singular values larger than the predefined threshold\footnote{The threshold value is $\max(d_s, d_e)*eps*||\T||_2$. The $eps$ is floating-point machine epsilon value, i.e., 1.19209e-07 in our experiments}. The \cref{fig:T_rank} illustrates how the rank changes with the sequence length $d_s$. The numerical rank provides experimental support to the theoretical analysis.
\begin{equation}
    \Rank(\T) =
\left\{
	\begin{array}{ll}
		d_s  & \mbox{if } d_s \leq d_v, \\
		d_v & \mbox{if } d_s > d_v.
	\end{array}
\right.
\end{equation}
Thus,
\begin{equation}\label{dim_null_proof}
    \begin{aligned}
            \dim\big(\LN(\T)\big) &= d_s - \Rank(\T) \\
            &=
        \left\{
        	\begin{array}{ll}
        		0  & \mbox{if } d_s \leq d_v, \\
        		(d_s - d_v) & \mbox{if } d_s > d_v.
        	\end{array}
        \right.\\
            &= \max\;(d_s - d_v, 0)
    \end{aligned}
\end{equation}
\begin{figure}[t] 
    \centering 
    \includegraphics[width=0.8\linewidth]{./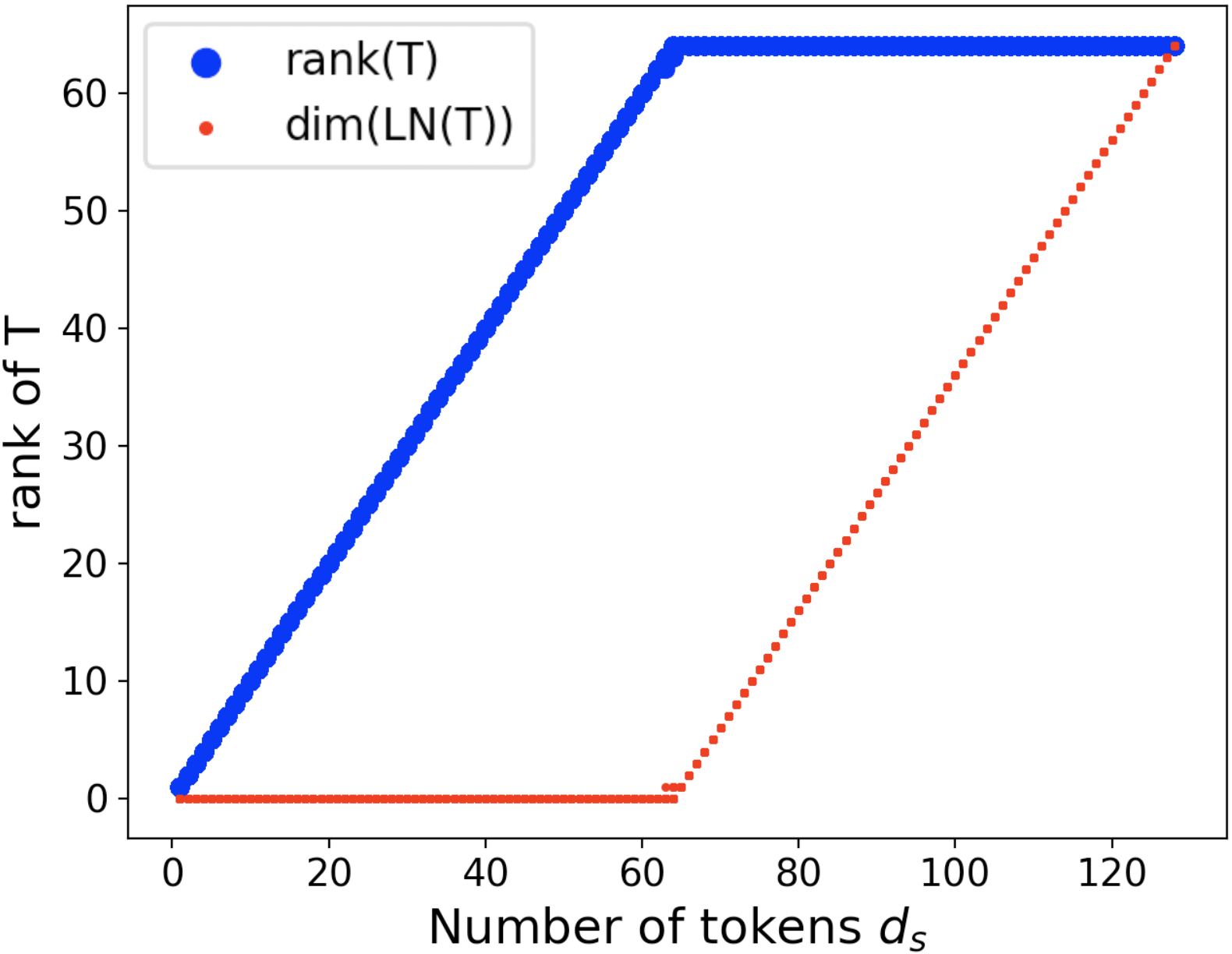} 
    \caption{\footnotesize Numerical rank of $\T$ (IMDB) and dimension of its left null space are scattered in blue and red, respectively.}
    \label{fig:T_rank}
\end{figure}
\noindent With this, we infer $\A$ is identifiable if $d_s \leq d_v$~=~$64$.
For the identifiability study, since we focus on a model's capability of learning unique attention weights, we will assume $\T$ has the maximum obtainable rank set by its upper bound.

\subsection{Idenitifiability when $\mathbf{\textit{d}_\textit{s} > \textit{d}_\textit{v}}$ \\ \hspace{8mm}(the hidden role of $\mathbf{\textit{d}_\textit{k}}$)} \label{a_non_identi}

In this case, from \cref{dim_null_proof}, we obtain a non zero value of $\dim\big(\LN(\T)\big)$. It allows us to find infinite $\Atilde$'s satisfying $(\A+\Atilde)\T=\A\T$. However, \ref{atilda} demands $\Atilde$ to be obtainable from the first phase of self-attention. As a first step, we focus our analysis on the attention matrix without applying $\softmax$ non-linearity, i.e., $\A = \left(\frac{\Q\K^T}{\sqrt{d_q}}\right)$. The analysis is crucial to identify constraints coming from the first phase of self-attention in \modelname{} that impact identifiability. Insights from this will help us analyse $\softmax$ version of $\A$.

\subsubsection{Attention Weights as Logits}\label{logits}
Since the logits matrix $\A$ is obtained from the product of $\Qm$ and $\K^T$, we can assert that
\begin{equation}
    \begin{aligned}
        \Rank(\A) &\leq \min\big(\Rank(\Qm), \Rank(\K^T)\big)\\
                  &\leq \min\big(d_e, d_k, d_q, d_e\big)\\
                  &= d_k.
    \end{aligned}
\end{equation}
Therefore, the rank of attention matrix producible by the head in the first phase of self-attention can at most be equal to the size of key vectors $d_k$. On this basis, the head can produce only those $\A + \Atilde$ satisfying
\begin{equation} \tag{constraint-R2}\label{constraint_0}
    \Rank(\A + \Atilde) \leq d_k
\end{equation}
\begin{proposition}
There exists a non-trivial $\Atilde$ that satisfy $(\A+\Atilde)\T=\A\T$ and \ref{constraint_0}. Hence, $\A$ is unidentifiable.
\end{proposition}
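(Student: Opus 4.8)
The plan is to exhibit a single explicit $\Atilde$ of rank one. Recall the two requirements: $\Atilde$ must satisfy \ref{atilda}, i.e.\ $\Atilde\T=\mathbf{0}$ (equivalently, every row of $\Atilde$ must lie in $\LN(\T)$), and it must respect \ref{constraint_0}, i.e.\ $\Rank(\A+\Atilde)\le d_k$. Since we are in the regime $d_s>d_v$, \cref{dim_null_proof} gives $\dim\big(\LN(\T)\big)=d_s-d_v>0$, so $\LN(\T)$ contains a nonzero vector; this is precisely what makes the first requirement satisfiable. The only genuine tension is with the second requirement, because an arbitrary nonzero perturbation would typically push the rank up to $d_k+1$.

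To reconcile both conditions, first I would pick any nonzero $\mathbf{v}\in\LN(\T)$ and any nonzero column $\mathbf{u}\in\Cs(\A)$ (the latter exists whenever $\A\neq\mathbf{0}$), and set $\Atilde=\mathbf{u}\mathbf{v}^{T}$, a nonzero rank-one matrix. The key observation is that drawing $\mathbf{u}$ from the column space of $\A$ prevents the rank from growing: every column of $\A+\Atilde$ is a column of $\A$ plus a scalar multiple of $\mathbf{u}\in\Cs(\A)$, so $\Cs(\A+\Atilde)\subseteq\Cs(\A)$ and hence $\Rank(\A+\Atilde)\le\Rank(\A)\le d_k$, which is exactly \ref{constraint_0}.

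Verifying the first requirement is then immediate: $\Atilde\T=\mathbf{u}\,(\mathbf{v}^{T}\T)=\mathbf{u}\,\mathbf{0}=\mathbf{0}$ because $\mathbf{v}\in\LN(\T)$, whence $(\A+\Atilde)\T=\A\T$. Since $\Atilde\neq\mathbf{0}$, this produces the required nontrivial perturbation and establishes unidentifiability.

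I expect the main obstacle to be the rank bookkeeping rather than the null-space condition. The naive choice of $\mathbf{u}$ (say an arbitrary coordinate vector) only yields $\Rank(\A+\Atilde)\le\Rank(\A)+1=d_k+1$, one too many, so the entire argument hinges on the observation that $\mathbf{u}$ can be taken from $\Cs(\A)$ itself; one should also check that the analogous row-space trick fails here, since $\Rs(\A)\cap\LN(\T)$ may be trivial when $d_k\le d_v$, which is why I would work on the column side. Finally, I would dispose of the degenerate case $\A=\mathbf{0}$ separately, where any rank-one $\Atilde$ with a single row in $\LN(\T)$ trivially satisfies $\Rank(\A+\Atilde)=1\le d_k$.
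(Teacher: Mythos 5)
Your proof is correct, but it takes a genuinely different route from the paper's. The paper works row by row: assuming (without loss of generality) that the first $d_k$ rows $a_1,\ldots,a_{d_k}$ of $\A$ are linearly independent, it writes every later row as $a_j=\sum_{i=1}^{d_k}\lambda_i^j a_i$, picks the first $d_k$ rows of $\Atilde$ freely from $\LN(\T)$, and defines each remaining row of $\Atilde$ by the \emph{same} coefficients, $\Tilde{a}_j=\sum_{i=1}^{d_k}\lambda_i^j\Tilde{a}_i$; this forces every row of $\A+\Atilde$ to be a combination of its first $d_k$ rows, giving $\Rank(\A+\Atilde)\le d_k$, while closure of $\LN(\T)$ under linear combinations keeps all rows of $\Atilde$ in the null space. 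Your construction instead perturbs by a single rank-one matrix $\mathbf{u}\mathbf{v}^T$ with $\mathbf{v}\in\LN(\T)$ and, crucially, $\mathbf{u}\in\Cs(\A)$, so that every column of $\A+\Atilde$ stays in $\Cs(\A)$ and the rank bound is inherited directly from $\Rank(\A)\le d_k$. Each approach buys something: the paper's construction exhibits a large family of admissible perturbations ($d_k$ independent choices from $\LN(\T)$), which supports its broader point that non-identifiability is pervasive and sets up the softmax analysis in the following subsection; your argument is more compact, avoids any assumption about which rows of $\A$ are independent (and so silently covers $\Rank(\A)<d_k$, a case the paper's ``without loss of generality'' phrasing glosses over), and isolates the one essential idea, namely keeping the perturbation's column space inside $\Cs(\A)$. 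Your side remarks are also sound: the row-space analogue would require $\Rs(\A)\cap\LN(\T)\neq\{\mathbf{0}\}$, which is not guaranteed, and the degenerate case $\A=\mathbf{0}$ does need, and gets, a separate one-line treatment.
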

\begin{proof}
Let $a_1, \ldots, a_{d_s}$ and $\Tilde{a}_1, \ldots, \Tilde{a}_{d_s}$ denote rows of $\A$ and $\Atilde$, respectively. Without the loss of generality, let $a_1, \ldots, a_{d_k}$ be linearly independent rows. For all $j>d_k$, $a_j$ can be represented as a linear combination $\sum_{i=1}^{d_k}\lambda_{i}^ja_{i}$, where $\lambda_{i}^j$ is a scalar. Next, we independently choose first $k$ rows of $\Atilde$ that are $\{\Tilde{a}_1, \ldots, \Tilde{a}_{d_k}\}$ from $\LN(\T)$. From the same set of coefficients of linear combination $\lambda_i^j$ for $i \in \{1, \ldots, d_k\}$ and $j \in \{d_{k+1}, \ldots, d_s\}$, we can construct $j_\textsubscript{th}$ row of $\Atilde$ as $\Tilde{a}_j$~=~$\sum_{i=1}^{d_k}\lambda_{i}^j\Tilde{a_{i}}$. Now, since we can construct the $j_\text{th}$ row of $(\A+\Atilde)$ from the linear combination of its first $d_k$ rows~as~$\sum_{i=1}^{d_k}\lambda_{i}^j(a_{i}+\Tilde{a}_i)$, the rank of $(\A+\Atilde)$ is not more than~$d_k$. For a set of vectors lying in a linear space, a vector formed by their linear combination should also lie in the same space. Thus, the artificially constructed rows of $\Atilde$ belongs to $\LN(\T)$. Therefore, there exist an $\Atilde$ that establishes the proposition which claims the unidentifiability of $\A$.
\end{proof}

\subsubsection{Attention Weights as Softmaxed Logits} \label{soft_logits}
The $\softmax$ over attention logits generates attention weights with each row of $\A$ (i.e., $a_i$'s) is constrained to be a probability distribution. Hence, we can define constraint over $\Atilde$ as
\begin{align}
    (\A + \Atilde) &\geq \mathbf{0} \tag{P1} \label{constraint_1}\\
    \Atilde \T &= \mathbf{0} \tag{P2} \label{constraint_2}\\
    \Atilde \textbf{1} &= \mathbf{0} \tag{P3} \label{constraint_3}.
\end{align}
\ref{constraint_1} is non-negativity constraint on $(\A+\Atilde)$ as it is supposed to be the output of $\softmax$; \ref{constraint_2} denotes $\Atilde \in \LN(\T)$; \ref{constraint_3} can be derived from the fact $(\A + \Atilde)\textbf{1}$~=~\textbf{1} $\implies$ $(\A \textbf{1} + \Atilde \textbf{1})$~$=$~\textbf{1} $\implies \Atilde\textbf{1}$~$=$~$\mathbf{0}$ as ($\A\textbf{1}$~$=$~\textbf{1}). Where $\textbf{1} \in \Real^{d_s}$ is the vector of ones. The constraint in \ref{constraint_2} and \ref{constraint_3} can be combined and reformulated as $\Atilde[\T, \textbf{1}]=\mathbf{0}$. Following the similar analysis as in \cref{dim_null_proof}, we can obtain $\dim\big(\LN([\T, \textbf{1}])\big)=\max\big(d_s-(d_v+1),0\big)$. Disregarding the extreme cases when $a_i$ is a one-hot distribution, \citet{brunner2019identifiability} proved the existence and construction of non-trivial $\Atilde$'s satisfying all the constraints \ref{constraint_1}, \ref{constraint_2}, and \ref{constraint_3}.\footnote{For the sake of brevity, we skip the construction method.}

However, the proof by \citet{brunner2019identifiability} missed the \ref{constraint_0}, hence the existence of a non-trivial $\Atilde$ satisfying only the set of constraints \ref{constraint_1}, \ref{constraint_2} and \ref{constraint_3} may not be a valid proposition to claim attention weights unidentifiability. Essentially, the work largely ignored the constraints coming from the rank of the matrix that produces $\A$ after $\softmax$~\footnote{(input to the $\softmax$ is equivalent to $\A$ in \cref{logits})}. Let $\A_l$ denote logits $\left(\frac{\Q\K^T}{\sqrt{d_q}}\right)$ and $\softmax(\A_l)=(\A + \Atilde)$, where $\softmax$ is operated over each row of $\A_l$. We add an extra constraint on $\A_l$
\begin{equation}\tag{P4}\label{constraint_4}
    \Rank(\A_l) \leq d_k.
\end{equation}
The constraint \ref{constraint_4} confirms if there exists a logit matrix $\A_l$ that can generate $(\A+\Atilde)$, given constraints \ref{constraint_1}, \ref{constraint_2}, and \ref{constraint_3} are satisfied. The possibility of such an $\A_l$ will provide sufficient evidence that $\A$ is unidentifiable. Next, we investigate how the existence of $\Atilde$ is impacted by the size of key vector $d_k$ (query and key vector sizes are the same, i.e., $d_q$=$d_k$).

Let $(\A+\Atilde)(i,k)$ denotes $(i,k)\textsubscript{th}$ element of the matrix. We can retrieve the set of matrices $\A_l$ such that $\softmax(\A_l)={\A + \Atilde}$, where
\begin{equation}
    \label{a_l}
    \A_l(i,k) = c_i + \log(\A+\Atilde)(i,k)
\end{equation}
for some arbitrary $c_i \in \Real$; $\log$ denotes natural logarithm. As shown in \cref{fig:matrix_vector}, the column vectors of $\A_l$ can be written as $\bm{c}+\mathbf{\hat{a}_1}, \ldots, \bm{c}+\mathbf{\hat{a}_{d_s}}$. 

\begin{figure}[t] 
    \centering
    \includegraphics[width=0.9\linewidth]{./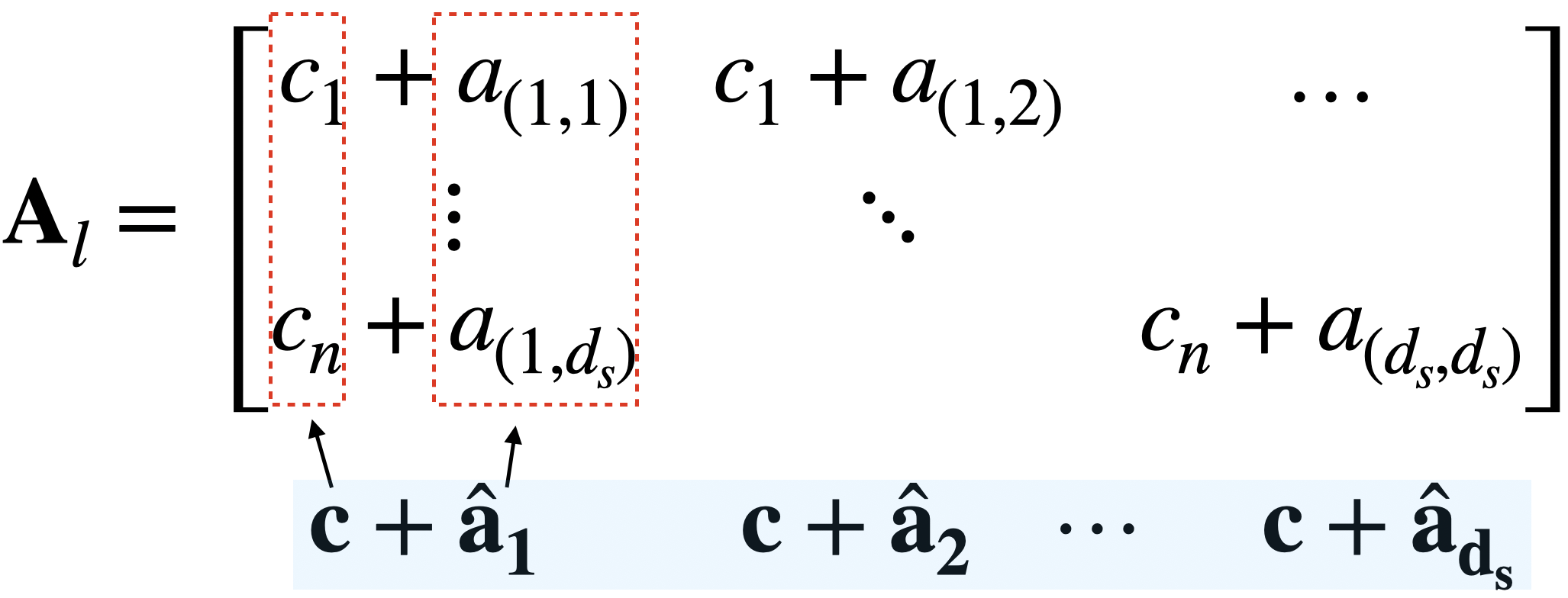} 
    \caption{\footnotesize Column vectors $(\bm{c}+\mathbf{\hat{a}_k})$ of $\A_l$, where $a_{(i,k)}$ represents $\log(\A+\Atilde)(i,k)$.}
    \label{fig:matrix_vector}
\end{figure}

For an arbitrarily picked $\Atilde$ satisfying constraint \ref{constraint_1}, \ref{constraint_2}, and \ref{constraint_3}, the dimensions of affine span $\mathcal{S}$ of $\{\mathbf{\hat{a}_1}, \ldots, \mathbf{\hat{a}_{d_s}}\}$ could be as high as $d_s-1$ (\cref{fig:affine_space}). In such cases, the best one could do is  to choose a $\mathbf{c_a} \in \mathcal{S}$ such that the dimension of the linear span of $\{\mathbf{\hat{a}_1 - c_a}, \ldots, \mathbf{\hat{a}_{d_s} - c_a}\}$, i.e., $\Rank(\A_l)$ is $d_s-1$. Hence, to satisfy \ref{constraint_4}, $d_s-1 \leq d_k \implies d_s \leq d_k + 1$. Thus, the set of $(\A+\Atilde)$ satisfying constraint \ref{constraint_1}, \ref{constraint_2} and \ref{constraint_3} are not always obtainable from attention head for $d_s > d_k$. We postulate

\begin{quote}
    Although it is easier to construct $\Atilde$ satisfying constraints \ref{constraint_1}, \ref{constraint_2} and \ref{constraint_3}, it is hard to construct $\Atilde$ satisfying constraint \ref{constraint_4} over the rank of logit matrix $\A_l$. Therefore, $\A$ becomes more identifiable as the size of key vector decreases.
\end{quote}

\begin{figure}[h] 
    \centering
    \includegraphics[width=0.6\linewidth]{./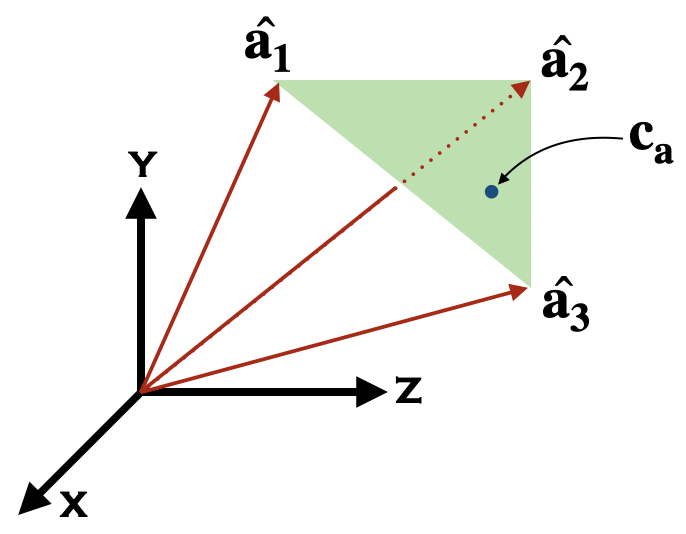} 
    \caption{This is a simplified illustration for the case $d_s=3$. Affine space (translated linear subspace) spanned by vectors $\mathbf{\hat{a}_1}$, $\mathbf{\hat{a}_2}$ and $\mathbf{\hat{a}_3}$. $\mathbf{c_a}$ can be any arbitrary vector in affine space. By putting $\mathbf{c}=-\mathbf{c_a}$, we can obtain a linear subspace whose rank is equal to rank of the affine subspace.}
    
    \label{fig:affine_space}
\end{figure}

\paragraph{Experimental evidence.} We conduct an experiment to validate the minimum possible numerical rank of $\A_l$ by constructing $\Atilde$. For $\Atilde$ to be obtainable from the phase 1, the minimum possible rank of $\A_l$ should not be higher than $d_k$. From IMDB dataset (\cref{classification}), we randomly sample a set of reviews with token sequence length $d_s$ ranging from 66 to 128 \footnote{$\dim\big(\LN(\T,\textbf{1})\big) > 0$ for $d_s>d_v+1=65$}. For each review, we construct 1000 $\Atilde$'s satisfying constraints \ref{constraint_1}, \ref{constraint_2}, and \ref{constraint_3} ---

First, we train a \modelname{} encoder-based IMDB review sentiment classifier (\cref{exp_setup}). We obtain an orthonormal basis for the left null space of $[\T,\textbf{1}]$ using singular value decomposition. To form an $\Atilde$, we generate $d_s$ random linear combinations of the basis vectors (one for each of its row). Each set of linear combination coefficients is sampled uniformly from $[-10, 10]$. All the rows are then scaled to satisfy the constraint \ref{constraint_1} as mentioned in \citet{brunner2019identifiability}. Using \cref{a_l}, we obtain a minimum rank matrix $\A_l$'s by putting $\textbf{c}=-\mathbf{\hat{a}_1}$. \Cref{fig:affine_space_rank} depicts the obtained numerical rank of $\A_l$. We observed all the obtained $\A_l$ from $(\A + \Atilde)$ (using \cref{a_l}) are full-row rank matrices. However, from the first phase of self-attention, the maximum obtainable rank of $\A_l$ is $d_k=64$. Thus, the experimentally constructed $\A_l$'s do not claim unidentifiability of $\A$ as it fails to satisfy the constraint \ref{constraint_4}, while for \citet{brunner2019identifiability}, it falls under the solution set to prove unidentifiability as it meets constraints \ref{constraint_1}, \ref{constraint_2}~and~\ref{constraint_3}. 

\begin{figure}[h] 
    \centering
    \includegraphics[width=0.9\linewidth]{./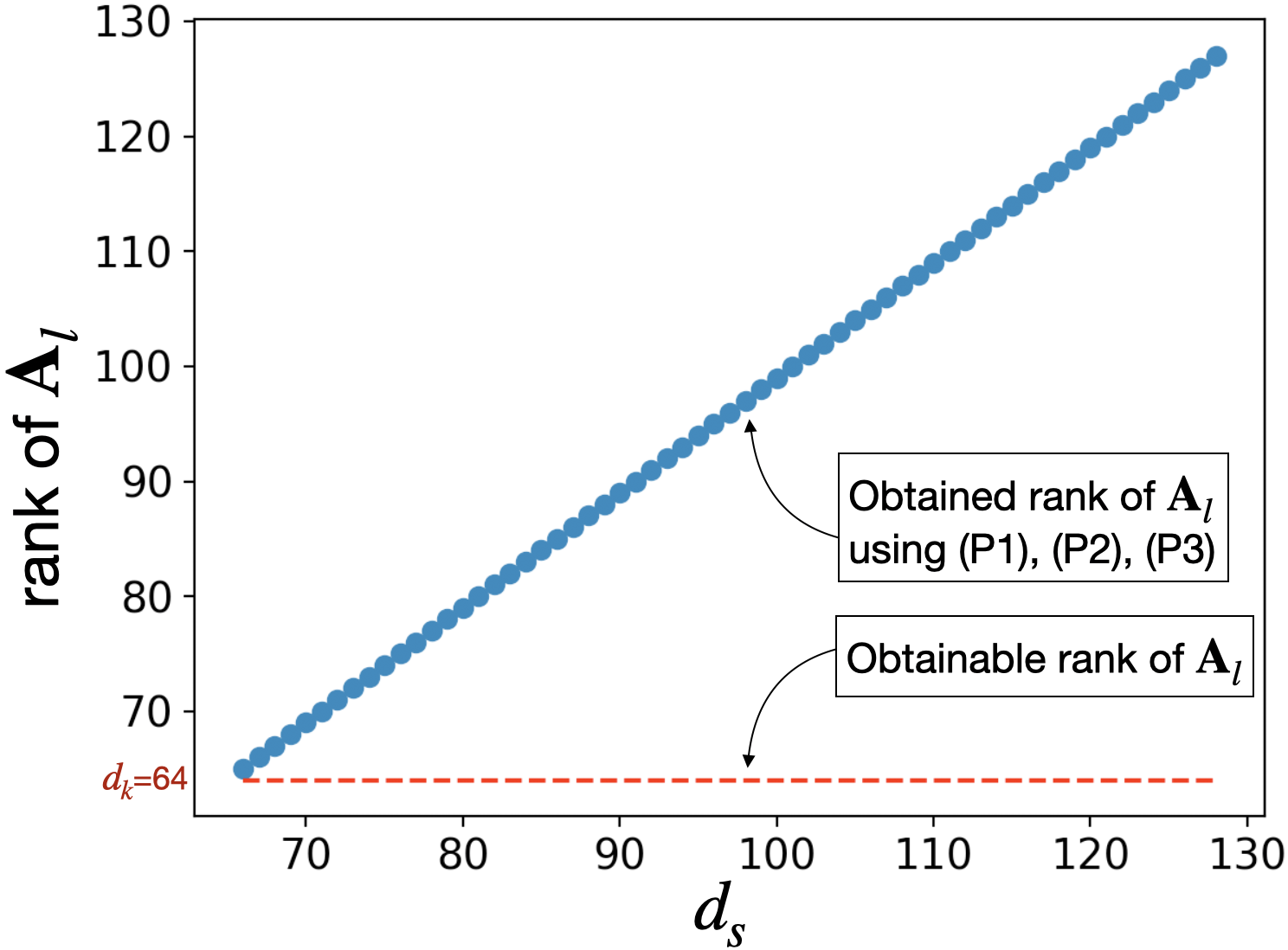} 
    \caption{\footnotesize The blue curve denotes the expected rank of $\A_l$'s obtained from $(\A+\Atilde)$, where $\Atilde$ satisfies the constraints \ref{constraint_1}, \ref{constraint_2}, and \ref{constraint_3}. The red curve denotes the maximum permissible rank of $\A_l$ that is obtainable from phase 1 of the head.}
    
    \label{fig:affine_space_rank}
\end{figure}

\section{Solutions to Identifiability} \label{sec:solutions}
Based on the Identifiability analysis in \cref{identifiability}, we propose basic solutions to make \modelname{}'s attention weights identifiable.

\paragraph{Decoupling $\mathbf{\textit{d}_\textit{k}}$.} Contrary to the regular \modelname{} setting where $d_k=d_v$, a simple approach is to decrease the value of $d_k$ that is the size of the key and query vector. It will reduce the possible solutions of $\Atilde$ by putting harder constraints on the rank of attention logits, i.e., $\A_l$ in \cref{a_l}. However, theoretically, $d_k$ decides the upper bound on dimensions of the space to which token embeddings are projected before the dot product. Higher the upper bound, more degree of freedom to choose the subspace dimensions as compared to the lower $d_k$ variants. Thus, there is a plausible trade-off when choosing between $d_k$ induced identifiability and the upper bound on the dimension of projected space.

\paragraph{Head Addition.}
To resolve the unidentifiability issue when sequence length exceeds the size of value vector, we propose to keep the value vector size and token embedding dimension to be more than (or equal to) the maximum allowed input tokens, i.e., $d_v \geq d_{s\text{-max}}$. In \citet{vaswani2017attention}, $d_v$ was bound to be equal to $d_e/h$, where $d_e$ is token embedding dimension and $h$ is number of heads. This constraint on $d_v$ is because of the concatenation of $h$ self-attention heads to produce $d_e$-sized output at the first sub-layer of the encoder. Thus, to decouple $d_v$ from this constraint, we keep $d_v=d_e$ and add each head's output.\footnote{$d_{s\text{-max}} < d_e$ as in the regular \modelname{} setting.}

\section{Classification Tasks}\label{classification}
For the empirical analysis of our proposed solutions as mentioned in \cref{sec:solutions}, we conduct our experiments on the following varied text classification tasks:

\subsection{Small Scale Datasets}
\paragraph{IMDB~\cite{IMDB}.} The dataset for the task of \textbf{sentiment classification} consist of IMDB movie reviews with their sentiment as positive or negative. Each of the train and test sets contain 25,000 data samples equally distributed in both the sentiment polarities.

\paragraph{TREC~\cite{trec_6}.} We use the 6-class version of the dataset for the task of \textbf{question classification} consisting of open-domain, facet-based questions. There are 5,452 and 500 samples for training and testing, respectively. 

\paragraph{SST~\cite{sst_data}.}Stanford sentiment analysis dataset consist of 11,855 sentences obtained from movie reviews. We use the 3-class version of the dataset for the task of \textbf{sentiment classification}. Each review is labeled as positive, neutral, or negative. The provided train/test/valid split is 8,544/2,210/1,101.

\subsection{Large Scale Datasets}
\paragraph{SNLI~\cite{bowman2015large}.} The dataset contain 549,367 samples in the training set, 9,842 samples in the validation set, and 9,824 samples in the test set. For the task of recognizing \textbf{textual entailment}, each sample consists of a premise-hypothesis sentence pair and a label indicating whether the hypothesis entails the premise, contradicts it, or neutral.

Please refer to \citet{zhang2015character} for more details about the following datasets:

\paragraph{Yelp.} We use the large-scale Yelp review dataset for the task of binary \textbf{sentiment classification}. There are 560,000 samples for training and 38,000 samples for testing, equally split into positive and negative polarities.

\paragraph{DBPedia.} The Ontology dataset for \textbf{topic classification} consist of 14 non-overlapping classes each with 40,000 samples for training and 5,000 samples for testing.

\paragraph{Sogou News.} The dataset for \textbf{news article classification} consist of 450,000 samples for training and 60,000 for testing. Each article is labeled in one of the 5 news categories. The dataset is perfectly balanced.

\paragraph{AG News.} The dataset for the \textbf{news articles classification} partitioned into four categories. The balanced train and test set consist of 120,000 and 7,600 samples, respectively.

\paragraph{Yahoo! Answers.} The balanced dataset for 10-class \textbf{topic classification} contain 1,400,000 samples for training and 50,000 samples for testing.

\paragraph{Amazon Reviews.} For the task of \textbf{sentiment classification}, the dataset contain 3,600,000 samples for training and 400,000 samples for testing. The samples are equally divided into positive and negative sentiment labels.

Except for the SST and SNLI, where the validation split is already provided, we flag 30\% of the train set as part of the validation set and the rest 70\% were used for model parameter learning.

\section{Experimental Setup} \label{exp_setup}
\paragraph{Setting up the encoder.} We normalize the text by lower casing, removing special characters, etc.\footnote{\url{https://pytorch.org/text/_modules/torchtext/data/utils.html}} For each task, we construct separate 1-Gram vocabulary ($U$) and initialize a trainable randomly sampled token embedding ($U \times d_e$) from $\mathcal{N}(0,1)$. Similarly, we randomly initialize a ($d_{\text{\textit{s}-max}} \times d_e$) positional embedding.

The encoder (\cref{encoder}) takes input a sequence of token vectors ($d_s \times d_e$) with added positional vectors. The input is then projected to key and query vector of size $d_k \in \{1,2,4,8,16,32,64,128,256\}$. For the \textbf{regula}r \modelname{} setting, we fix the number of heads $h$ to 8 and the size of value vector $d_v=d_e/h$ that is 64. For each token at the input, the outputs of attention heads are concatenated to generate a $d_e$-sized vector. For the \textbf{identifiable} variant of the \modelname{} encoder, $d_v=d_e=512$, this is equal to $d_{\text{\textit{s}-max}}$ to keep it identifiable up to the maximum permissible number of tokens. The outputs of all the heads are then added. Each token's contextualized representations (added head outputs) are then passed through the feed-forward network (\cref{encoder}). For classification, we use the encoder layer's output for the first token and pass it through a linear classification layer. In datasets with more than two classes, the classifier output is $\softmax$ed. In the case of SNLI, we use the shared encoder for both premise and hypothesis; the output of their first tokens is then concatenated just before the final classification layer. We use Adam optimizer, with learning rate =0.001, to minimize the cross-entropy loss between the target and predicted~label. For all the experiments, we keep the batch size as 256 and train for 20 epochs. We report the test accuracy obtained at the epoch with the best validation accuracy.

\paragraph{Numerical rank.} To generate the numerical rank plot on IMDB dataset as shown in \cref{fig:T_rank}, we train a separate \modelname{} encoder-based classifier. For a particular $d_s$ value, we sample 100 reviews from the dataset with token length $\geq d_s$ and clip each review to the maximum length $d_s$. The clipping will ensure the number of tokens is $d_s$ before feeding it to the encoder. The numerical rank is calculated for $\mathbf{T}$'s obtained from the first head of the encoder.

\section{Results and Discussion}
\begin{table*}[ht]
\centering
%\small
\resizebox{0.8\linewidth}{!}{
\begin{tabular}{l|c|ccccccccc}
\hline
\multirow{2}{*}{\textbf{Dataset}} &\multirow{2}{*}{\textbf{Version}} &\multicolumn{9}{c}{\textbf{Size of key vector $(d_k)$}}\\
%\cline{6-8}

&&\textbf{1} &\textbf{2} &\textbf{4} &\textbf{8} &\textbf{16} &\textbf{32} &\textbf{64} &\textbf{128} &\textbf{256}\\\hline

% \multirow{2}{*}{\textbf{IMDB}} &\textit{Con} &0.871 &0.864 &0.868 &0.862 &0.837 &0.817 &0.803 &0.783 &0.775 \\ 
%   &\textit{Add} &0.869 &0.869 &0.867 &0.866 &0.866 &0.864 &0.855 &0.821 &0.811 \\ \cline{2-11}

\multirow{2}{*}{\textbf{IMDB}} &\textit{Con} &0.884 &0.888 &0.886 &0.888 &0.846 &0.824 &0.803 &0.788 &0.755 \\ 
  &\textit{Add} &0.888 &0.885 &0.887 &0.884 &0.886 &0.882 &0.877 &0.832 &0.825 \\ \cline{2-11}

% \multirow{2}{*}{\textbf{TREC}} &\textit{Con} &0.805 &0.822 &0.795 &0.762 &0.787 &0.733 &0.739 &0.719 &0.587 \\  
%   &\textit{Add} &0.776 &0.811 &0.807 &0.819 &0.793 &0.813 &0.768 &0.734 &0.766 \\ \cline{2-11}
\multirow{2}{*}{\textbf{TREC}} &\textit{Con} &0.836 &0.836 &0.840 &0.822 &0.823 &0.764 &0.786 &0.706 &0.737 \\  
  &\textit{Add} &0.841 &0.842 &0.835 &0.842 &0.841 &0.836 &0.809 &0.809 &0.771 \\ \cline{2-11}

\multirow{2}{*}{\textbf{SST}} &\textit{Con} &0.643 &0.625 &0.627 &0.609 &0.603 &0.582 &0.574 &0.573 &0.554 \\  
  &\textit{Add} &0.599 &0.618 &0.628 &0.633 &0.628 &0.629 &0.592 &0.581 &0.586 \\ \cline{2-11}

\multirow{2}{*}{\textbf{SNLI}} &\textit{Con} &0.675 &0.674 &0.673 &0.672 &0.662 &0.659 &0.659 &0.655 &0.648 \\ 
  &\textit{Add} &0.683 &0.677 &0.674 &0.676 &0.673 &0.669 &0.663 &0.664 &0.655 \\ \cline{2-11}

% \multirow{2}{*}{\textbf{Yelp}} &\textit{Con} &0.914 &0.906 &0.913 &0.904 &0.879 &0.858 &0.854 &0.852 &0.845 \\ 
%   &\textit{Add} &0.918 &0.922 &0.919 &0.920 &0.917 &0.915 &0.907 &0.888 &0.895 \\ \cline{2-11}
\multirow{2}{*}{\textbf{Yelp}} &\textit{Con} &0.913 &0.911 &0.907 &0.898 &0.879 &0.862 &0.857 &0.849 &0.837 \\ 
  &\textit{Add} &0.914 &0.915 &0.916 &0.914 &0.915 &0.916 &0.910 &0.909 &0.891 \\ \cline{2-11}

% \multirow{2}{*}{\textbf{DBPedia}} &\textit{Con} &0.961 &0.956 &0.947 &0.942 &0.935 &0.928 &0.919 &0.913 &0.869 \\
%   &\textit{Add} &0.964 &0.964 &0.960 &0.959 &0.958 &0.955 &0.947 &0.943 &0.921 \\ \cline{2-11}
\multirow{2}{*}{\textbf{DBPedia}} &\textit{Con} &0.979 &0.977 &0.977 &0.971 &0.966 &0.961 &0.957 &0.951 &0.949 \\ 
  &\textit{Add} &0.979 &0.978 &0.979 &0.977 &0.978 &0.973 &0.970 &0.969 &0.964 \\ \cline{2-11}

% \multirow{2}{*}{\textbf{Sogou}} &\textit{Con} &0.914 &0.913 &0.913 &0.906 &0.889 &0.859 &0.850 &0.845 &0.828 \\ 
%   &\textit{Add} &0.914 &0.915 &0.912 &0.912 &0.914 &0.913 &0.905 &0.893 &0.865 \\ \cline{2-11}
\multirow{2}{*}{\textbf{Sogou}} &\textit{Con} &0.915 &0.907 &0.898 &0.900 &0.893 &0.888 &0.868 &0.858 &0.838 \\ 
  &\textit{Add} &0.915 &0.908 &0.906 &0.904 &0.913 &0.914 &0.910 &0.906 &0.899 \\ \cline{2-11}

\multirow{2}{*}{\textbf{AG News}} &\textit{Con} &0.906 &0.903 &0.904 &0.904 &0.886 &0.877 &0.870 &0.870 &0.869 \\ 
  &\textit{Add} &0.902 &0.908 &0.907 &0.906 &0.897 &0.899 &0.901 &0.897 &0.893 \\ \cline{2-11}

% \multirow{2}{*}{\textbf{Yahoo}} &\textit{Con} &0.651 &0.631 &0.616 &0.601 &0.583 &0.566 &0.541 &0.521 &0.478 \\  
%   &\textit{Add} &0.665 &0.668 &0.669 &0.665 &0.656 &0.630 &0.616 &0.596 &0.504 \\ \cline{2-11}
\multirow{2}{*}{\textbf{Yahoo}} &\textit{Con} &0.695 &0.690 &0.684 &0.664 &0.644 &0.627 &0.616 &0.597 &0.574 \\  
  &\textit{Add} &0.697 &0.695 &0.696 &0.693 &0.693 &0.694 &0.688 &0.649 &0.683 \\ \cline{2-11}

% \multirow{2}{*}{\textbf{Amazon}} &\textit{Con} &0.917 &0.918 &0.916 &0.913 &0.889 &0.884 &0.877 &0.871 &0.857 \\ 
%   &\textit{Add} &0.920 &0.918 &0.918 &0.917 &0.917 &0.901 &0.892 &0.886 &0.878 \\ \hline
\multirow{2}{*}{\textbf{Amazon}} &\textit{Con} &0.924 &0.925 &0.923 &0.922 &0.900 &0.892 &0.887 &0.882 &0.873 \\ 
  &\textit{Add} &0.925 &0.923 &0.925 &0.924 &0.924 &0.920 &0.907 &0.896 &0.889 \\ \hline

\end{tabular}
}
\caption{\label{performance_table} The test accuracy on varied text classification tasks spread over ten datasets. \textit{Con} means the regular concatenation of heads with $d_v=d_e/h$, \textit{Add} denotes encoder variant where $d_v=d_e$ and outputs of heads are added. In the regular \modelname{} encoder $Con$, the concatenation of $d_v$-sized output of $h$ heads followed by $d_e \times d_e$ linear transformation can be understood as first doing linear $d_v \times d_e$ linear transform of each head and then addition of the transformed output (\cref{fig:hadd}). In the \textit{Add} variant, we first add $h$ $d_v$-sized head outputs followed by $d_e \times d_e$ linear transformation.}
\end{table*}

For the identifiable variant, similar to \cref{a_identi}, we plot the numerical rank of $\T$ with input sequence length as shown in \cref{fig:id_T_rank}. Unlike \cref{fig:T_rank}, where $\dim\big(\LN(\T)\big)$ linearly increases after $d_s=64$, we find the dimension is zero for a larger $d_s$ ($\sim 380$). The zero dimensional (left) null space of $\T$ confirms there exist no nontrivial solution to the constraint \ref{constraint_0}, i.e., $\Atilde=\{\mathbf{0}\}$. Thus, the attention weights $\A$ are identifiable for a larger range of length of the input sequence.

\begin{figure}[ht] 
    \centering 
    \includegraphics[width=0.9\linewidth]{./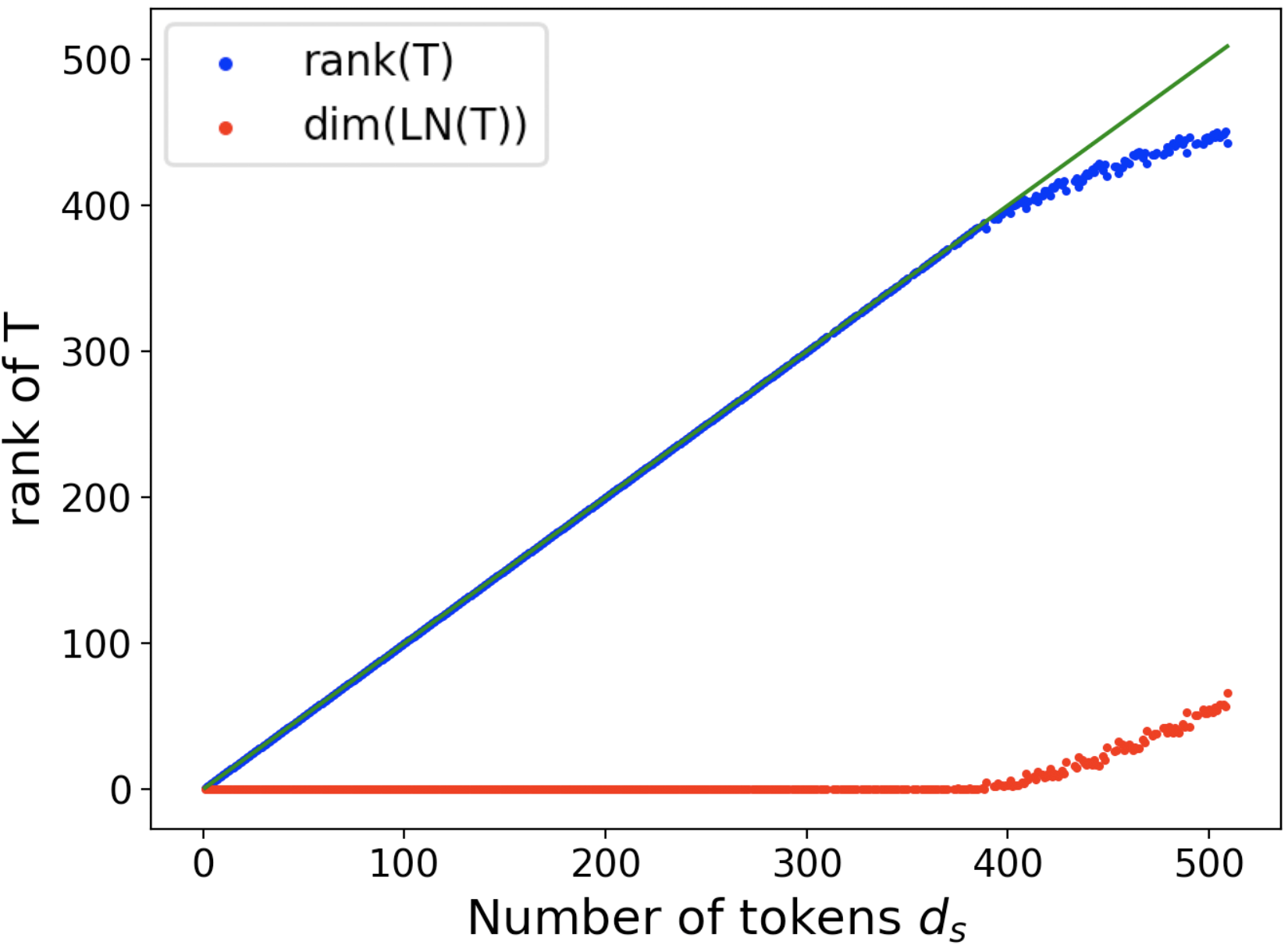} 
    \caption{Scatter plots in red and blue show $\Rank(\T)$ and $\dim\big(\LN(\T)\big)$, respectively, for matrices $\T$ obtained from the second phase of attention by feeding IMDB samples to the encoder. The green line shows the desired $\Rank(\T)$ for which $\dim\big(\LN(\T)\big)=0$ and thus attention weights are identifiable.} 
    \label{fig:id_T_rank}
\end{figure}

It is important that the identifiability of attention weights should not come at the cost of reduced performance of the model. To investigate this issue, we compare the performance of the identifiable \modelname{} encoder against its regular settings (\cref{exp_setup}) on varied text classification tasks.

For the regular setting, as discussed in \cref{sec:solutions} as one of the solutions, the \modelname{} can be made identifiable by decreasing the size of the key vector $d_k$. The rows of the \Cref{performance_table} corresponding to \textit{Con} denotes regular \modelname{} setting with varying size of key vector. We observe the classification accuracy at the lower $d_k$ is comparable or higher than large $d_k$ values, thus, the enhanced identifiability does not compromise with the model's classification accuracy. However, we notice a general performance decline with an increase in the size of the key vector. We speculate that for simple classification tasks, the lower-dimensional projection for key and query vector works well. However, as the task becomes more involved, a higher dimension for the projected subspace could be essential. Nonetheless, as we do not have strong theoretical findings, we leave this observation for future work.

Another solution to identifiability is to increase $d_v$ to $d_e$ and add the heads' outputs. This setting corresponds to the \textit{Add} rows in the \Cref{performance_table}. For key vector size $d_k$= 1, 2, and 4, We find the identifiable \modelname{}'s performance is comparable to the regular settings. For $d_k \geq 8$, as a general observation, we find the performance of \textit{Add} does not drop as drastically as  \textit{Con} with an increase in $d_k$. This could be due to the larger size of value vector leading to the more number of parameters in \textit{Add} that compensate for the significant reduction in the model's accuracy.

On the large-scale datasets, we observe that \textit{Add} performs slightly better than \textit{Con}. Intuitively, as shown in \cref{fig:hadd}, we can increase the size of value vector to increase the dimension of the space on which each token is projected. A higher dimensional subspace can contain more semantic information to perform the specific task.

Even though the theoretical analysis shows the possibility of a full row rank of $\T$ and identifiable attention weights, the $\T$ obtained from a trained model might not contain all the rows linearly independent as $d_s$ increases. We can explain this from the semantic similarities between words co-occurring together \cite{harris1954distributional}. The similarity is captured as the semantic relationship, such as dot product, between vectors in a linear space. As the number of tokens in a sentence, i.e., $d_s$ increases, it becomes more likely to obtain a token vector from the linear combination of other tokens.

\section{Conclusion}
\label{sec:conclusion}
This work probed \modelname{} for identifiability of self-attention, i.e., the attention weights can be uniquely identified from the head's output. With theoretical analysis and supporting empirical evidence, we were able to identify the limitations of the existing study by \citet{brunner2019identifiability}. We found the study largely ignored the constraint coming from the first phase of self-attention in the encoder, i.e., the size of the key vector. Later, we proved how we can utilize $d_k$ to make the attention weights more identifiable. To give a more concrete solution, we propose encoder variants that are more identifiable, theoretically as well as experimentally, for a large range of input sequence lengths. The identifiable variants do not show any performance drop when experiments are done on varied text classification tasks.
Future works may analyse the critical impact of identifiability on the explainability and interpretability of the \modelname{}. 

% \section{Supplementary Materials}
% \label{sec:supplementary}

\section*{Acknowledgments}
This research is supported by A*STAR under its RIE 2020 Advanced Manufacturing and Engineering programmatic grant, Award No.– A19E2b0098.

% The acknowledgments should go immediately before the references. Do not number the acknowledgments section.
% \textbf{Do not include this section when submitting your paper for review.}

\bibliographystyle{acl_natbib}
\bibliography{acl2021}
\clearpage

\appendix
\section{Background on Matrices}

\subsection{Span, Column space and Row space}
Given a set of vectors \textbf{V} $\coloneqq \{\text{\textbf{v}}_1, \text{\textbf{v}}_2, \ldots, \text{\textbf{v}}_n \}$, the span of \textbf{V},  $\Span(\text{\textbf{V}})$, is defined as the set obtained from all the possible linear combination of vectors in \textbf{V}, i.e., 
\begin{equation*}
\Span (\text{\textbf{V}})\coloneqq \{\sum_{i=1}^{n} \lambda_i \text{\textbf{v}}_i \mid \lambda_i \in \mathbb{R}, \;i \in \{1,2,\ldots,n\}\}.
\end{equation*}
The $\Span$(\textbf{V}) can also be seen as the smallest vector space that contains the set \textbf{V}.

Given a matrix $\A$ $\in \mathbb{R}^{m\times n}$, the column space of $\A$, $\Cs(\A)$, is defined as space spanned by its column vectors. Similarly, the row space of $\A$, $\Rs(\A)$, is the space spanned by the row vectors of $\A$. $\Cs(\A)$ and $\Rs(\A)$ are the subspaces of the real spaces $\Real^m$ and $\Real^n$, respectively. If the row vectors of $\A$ are linearly independent, the $\Rs(\A)$ will span $\Real^m$. A similar argument holds between $\Cs(\A)$ and $\Real^n$. 

\subsection{Matrix Rank}\label{rank}
The rank of a matrix $\Pm$ (denoted as $\Rank(\Pm))$ tells about the dimensions of the space spanned by the row vectors or column vectors. It can also be seen as the number of linearly independent rows or columns. The following properties hold
\begin{align*}
    \Rank\big(\Pm\big) &\leq \Min \Big(m_p, n_p\Big)\\
    \Rank\big(\Pm \Qm\big) &\leq \Min\Big(\Rank(\Pm), \Rank(\Qm)\Big).\\
\end{align*}
Where, $\Pm$ and $\Qm$ are $m_p \times n_p$ and $m_q \times n_q$ dimensional matrices, respectively.

\subsection{Null Space}\label{null}
The left null space of a $m_p \times n_p$ matrix $\Pm$ can be defined as the set of vectors \textbf{v} - 
\begin{equation}
    \LN\big(\Pm\big) = \{\text{\textbf{v}}^T\  \in \Real^{1\times m_p}\mid \text{\textbf{v}}^T \Pm = 0\}
\end{equation}

If the rows of $\Pm$ are linearly independent ($\Pm$ is full-row rank) the left null space of $\Pm$ is zero dimensional. The only solution to the system of equations $\text{\textbf{v}}\Pm = 0$ is trivial, i.e., \textbf{v}=0. The dimensions of the null space, known as nullity, of $\Pm$ can be calculated as
\begin{equation} \label{null_space}
    \dim \big(\LN(\Pm)\big) = m_p - \Rank(\Pm).
\end{equation}
The nullity of $\Pm$ sets the dimensions of the space \textbf{v} lies in. In \cref{identifiability}, we utilize our knowledge of \cref{rank} and \cref{null} to analyse identifiability in a \modelname{}.

% \section{Experimental Detials}
% For all the experiments, we keep the batch size as 256 and train for 20 epochs. We report the test accuracy obtained at the epoch with the best validation accuracy. The learning rate was kept 0.001.

% \paragraph{Numerical rank.} To generate the numerical rank plot on IMDB dataset (\cref{fig:T_rank}), we clip the maximum sequence length to $d_s$ and train a separate encoder-based sentiment classifier. To collect reading for a $d_s$ value, we randomly sample 100 review's with sequence length $\geq d_s$. The clipping will ensure the processed length is $d_s$ before feeding it to the encoder. We calculate numerical rank of all the $\mathbf{T}$'s from the first head of the encoder.

\end{document}